\def\1{\bm{1}}
\def\rmI{{\mathbf{I}}}
\DeclareMathAlphabet{\mathsfit}{\encodingdefault}{\sfdefault}{m}{sl}
\SetMathAlphabet{\mathsfit}{bold}{\encodingdefault}{\sfdefault}{bx}{n}
\def\gL{{\mathcal{L}}}
\def\gN{{\mathcal{N}}}
\def\gU{{\mathcal{U}}}
\DeclareMathOperator*{\argmin}{arg\,min}
\newcommand{\bbx}{\mathbf{x}}
\newcommand{\bbe}{\boldsymbol{\epsilon}}
\newcommand{\bbt}{\boldsymbol{\theta}}
\definecolor{cvprblue}{rgb}{0.21,0.49,0.74}
\newtheorem{theorem}{Theorem}
\theoremstyle{remark}
\newtheorem*{remark}{Remark}
\title{Consistent3D: Towards Consistent High-Fidelity Text-to-3D Generation with Deterministic Sampling Prior}
\author{%
Zike Wu$^{1,4}$ \quad 
Pan Zhou$^{*2,4}$ \quad Xuanyu Yi$^{1,4}$ \quad Xiaoding Yuan$^{3}$ \quad Hanwang Zhang$^{1,5}$ \\
{\small $^1$Nanyang Technological University} \ 
{\small $^2$Singapore Management University} \ 
{\small $^3$Johns Hopkins University} \
{\small $^4$Sea AI Lab} \ 
{\small $^5$Skywork AI} \\
\footnotesize{\texttt{zike001@e.ntu.edu.sg}}, \quad \footnotesize{\texttt{panzhou@smu.edu.sg}}, \quad
\footnotesize{\texttt{xuanyu001@e.ntu.edu.sg}}, \quad
\footnotesize{\texttt{xyuan19@jhu.edu}}, \quad
\footnotesize{\texttt{hanwangzhang@ntu.edu.sg}} \\
}
\begin{document}
\twocolumn[{
\renewcommand\twocolumn[1][]{#1}
\maketitle
\centering
\vspace{-0.5cm}
\includegraphics[width=\linewidth]{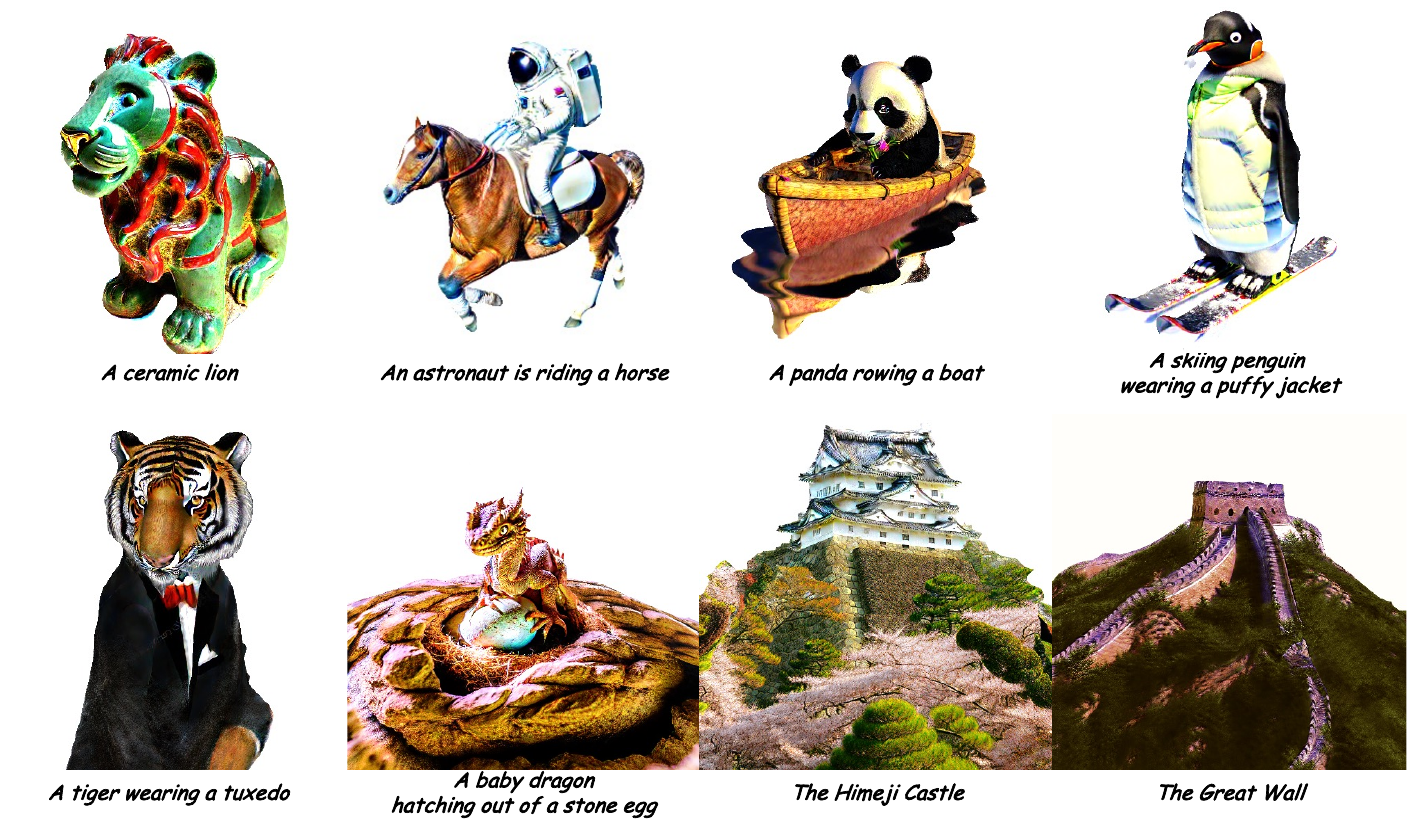}
\captionsetup{type=figure}
\vspace{-23pt}
\caption{Examples generated by Consistent3D. Our methods can generate detailed, diverse 3D objects and large-scale scenes from a wide range of textual prompts.}
\vspace{0.1cm}
\label{fig:1}
}]

{\let\thefootnote\relax\footnotetext{$^*$Corresponding author.}}

\begin{abstract}

Score distillation sampling (SDS) and its variants have greatly boosted the development of text-to-3D generation, but are vulnerable to geometry collapse and poor textures yet. To solve this issue, we first deeply analyze the SDS and find that its distillation sampling process indeed corresponds to the trajectory sampling of a stochastic differential equation~(SDE): SDS samples along an SDE trajectory to yield a less noisy sample which then serves as a guidance to optimize a 3D model. However, the randomness in SDE sampling often leads to a diverse and unpredictable sample which is not always less noisy, and thus is not a consistently correct guidance, explaining the vulnerability of SDS. Since for any SDE, there always exists an ordinary differential equation (ODE) whose trajectory sampling can deterministically and consistently converge to the desired target point as the SDE,  we propose a novel and effective ``Consistent3D" method that explores the ODE deterministic sampling prior for text-to-3D generation. Specifically, at each training iteration, given a rendered image by a 3D model, we first estimate its desired 3D score function by a pre-trained 2D diffusion model, and build an ODE for trajectory sampling. Next, we design a consistency distillation sampling loss which samples along the ODE trajectory to generate two adjacent samples and uses the less noisy sample to guide another more noisy one for distilling the deterministic prior into the 3D model. Experimental results show the efficacy of our Consistent3D in generating high-fidelity and diverse 3D objects and large-scale scenes, as shown in Fig.~\ref{fig:1}. The codes are available at \url{https://github.com/sail-sg/Consistent3D}.

\end{abstract}
\section{Introduction}
Diffusion models~(DMs) have recently garnered significant attention in the realm of image synthesis, as evidenced by their remarkable capabilities~\cite{rombach2022high,wu2023fast}. This notable progress can be largely attributed to the integration of large-scale image-text pair datasets and the evolution of scalable generative model architectures~\cite{ronneberger2015u,peebles2023scalable}. This recent success has seamlessly transcended into the domain of text-to-3D generation by leveraging the pre-trained 2D diffusion models~\cite{rombach2022high, ramesh2022hierarchical} to guide the 3D generation process, regardless of the absence of large-scale 3D generative models~\cite{wang2023score,chen2023fantasia3d}.

The pivotal breakthrough in this field stems from the finding that one can use the score function predicted by pre-trained 2D diffusion models, such as Stable Diffusion~\cite{rombach2022high}, to estimate the 3D score function~\cite{hong2023debiasing,wang2023score,graikos2022diffusion}. Since this score function indicates the direction of the higher data density~\cite{song2020improved,ho2022classifier}, one can first use it to build a stochastic differential equation~(SDE)~\cite{SGM}, and then sample along the SDE solution trajectory (\ie, SDE reverse process) to iteratively improve a learnable 3D model (\eg, NeRF~\cite{mildenhall2021nerf} or Mesh~\cite{shen2021deep}). This is also the underlying mechanism behind the prevalent and leading text-to-3D approach, Score Distillation Sampling~(SDS)~\cite{poole2022dreamfusion}. In each training iteration, SDS follows the forward SDE to inject noise into a rendered image by a learnable 3D model, and then samples a more realistic pseudo-image along the SDE solution trajectory, where the 3D score function of the SDE is estimated by a pre-trained diffusion model~\cite{wang2023score}. Next, SDS pulls its rendered image closer to the pseudo-image via optimizing the learnable 3D model.  

However, as illustrated in Fig.~\ref{fig:flow}, the high randomness inherent in the SDE solution distribution~\cite{SGM,DDPM} leads to a highly diverse and unpredictable next point in the solution trajectory, \eg, the pseudo-image~\cite{shi2023mvdream, zhu2023hifa} in SDS. Although this trajectory may eventually converge to a specific target, \eg, the desired realistic image in SDS, the sampled next point does not always provide the correct guidance in each iteration. This lack of reliability also applies to SDS, significantly increasing the optimization difficulty of the 3D model. It also helps to explain why SDS is so vulnerable and often suffers from geometry collapse and poor fine-grained texture in practice~\cite{wang2023prolificdreamer,shi2023mvdream,tang2023dreamgaussian}.  

\begin{figure}[!t]
\centering
\includegraphics[width=.9\linewidth]{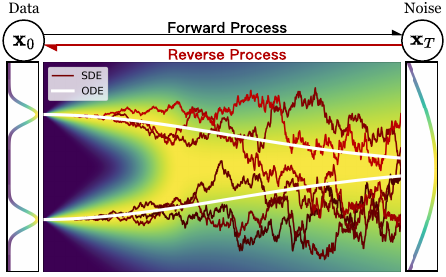}
\vspace{-6pt}
\caption{Comparison between the (reverse) trajectory samplings in the stochastic differential equation (SDE)  and ordinary differential equation (ODE).}
\label{fig:flow}
\vspace{-15pt}
\end{figure}

To address this critical issue, in this paper, we propose a novel and effective method, dubbed``Consistent3D'', which guides text-to-3D generation using deterministic sampling prior. As illustrated in Fig.~\ref{fig:flow}, for these unpredictable and uncontrollable trajectories sampled from SDE solution distribution, there theoretically always exists a corresponding ordinary differential equation (ODE) whose trajectory shares the same marginal distributions with the SDE solution~\cite{SGM}. Importantly, this ODE trajectory is deterministic and consistently converges to the same target point as the SDE. Sampling along the ODE trajectory guarantees a predictable and deterministic next point, which always directs towards the desired target and thus providing a reliable and consistent guidance. This motivates us to explore the text-to-3D generation from the ODE deterministic sampling perspective. 

Specifically, during each training iteration, we begin by estimating the desired 3D score function from the rendered images produced by the learnable 3D model using pre-trained 2D diffusion models. Subsequently, we build a corresponding ODE for solution trajectory sampling. To effectively optimize the underlying 3D representations, we then introduce a Consistency Distillation Sampling loss~(CDS), which leverages deterministic sampling prior along the ODE flow. In detail, for each rendered image, we first inject a \textit{fixed} noise to the rendered image so that the corresponding noisy sample lies in the ODE solution distribution and thus can be well denoised. Following this, we sample two \textit{adjacent} points from the ODE trajectory given the noisy sample. The less-noisy sample is then used to guide its more-noisy counterpart, thereby distilling the deterministic prior of the ODE trajectory into the 3D model. Here we use fixed noise to ensure that the samplings from the ODE trajectory for all rendered images converge to the same targeted realistic image, thereby offering more consistent guidance and enhancing the optimization of the 3D model.
 
Extensive experimental results showcase the efficacy of Consistent3D in generating high-fidelity and diverse 3D objects, along with large-scale scenes, as shown in Fig.~\ref{fig:1} and Fig.~\ref{fig:our_results}. Comparative evaluations against existing methods, including DreamFusion~\cite{poole2022dreamfusion}, Magic3D~\cite{lin2023magic3d} and ProlificDreamer~\cite{wang2023prolificdreamer}, demonstrate the superiority of Consistent3D in both qualitative and quantitative terms. The proposed approach effectively addresses the challenges associated with randomness in the SDE solution distribution, offering a more reliable and consistent framework to guide the text-to-3D generation process.

\section{Related Works}
\noindent\textbf{Diffusion Models}~\cite{SGM,DDPM,yang2022diffusion} are powerful tools for complex data modeling and generation.
Their robust and stable capabilities for complex data modeling have also led to their successful application in various domains, such as image~\cite{dhariwal2021diffusion,gu2022vector,wu2023fast}, video~\cite{ho2022imagen,ho2022video,khachatryan2023text2video}, and 3D~\cite{poole2022dreamfusion, wang2023score}, \etc. 
Regarding improving the sampling efficiency, there are two main approaches: learning-free sampling and learning-based sampling. Learning-free sampling typically involves discretizing reverse-time SDE~\cite{SGM,CLD} or ODE~\cite{lu2022dpm,DDIM,edm,PSNR,gDDIM}, while learning-based sampling is mainly based on knowledge distillation~\cite{meng2022distillation,PD,CD}. This paper is driven by recent progress in learning-based sampling, particularly in distilling knowledge from ODE sampling~\cite{PD,CD}.

\noindent\textbf{Text-to-3D Generation} stands for generating 3D contents from a given text description. Current 3D generative models~\cite{nichol2022point,jun2023shap}, usually work in a single object category and suffer from limited diversity due to the lack of large-scale 3D datasets. To achieve open-vocabulary 3D generation, pioneered by DreamFusion~\cite{poole2022dreamfusion}, several approaches propose to lift text-image diffusion models~\cite{rombach2022high} for 3D generation~\cite{zhao2023efficientdreamer,wang2023prolificdreamer,zhu2023hifa}. The key mechanism of such approaches is the score distillation sampling (SDS), where diffusion priors are used to supervise the optimization of a 3D representation. The following works continue to further improve the stability and fidelity of generation of various aspects, \eg, advanced 3D representation~\cite{chen2023fantasia3d,tang2023dreamgaussian,yi2023gaussiandreamer,wang2023neus2}, coarse-to-fine training strategy~\cite{lin2023magic3d,zhu2023hifa,wang2023prolificdreamer,yi2024diffusion} and 3D-aware diffusion priors~\cite{zhao2023efficientdreamer,shi2023mvdream,li2023sweetdreamer,long2023wonder3d,tewari2023diffusion}. 

\section{Preliminaries}
\label{sec:pre}

\noindent\textbf{Diffusion Models (DMs).}
They consist of a forward diffusion process and a reverse sampling process. During the forward process, DMs gradually add Gaussian noise to the vanilla sample $\bbx_0 \sim p_{\text{data}}(\bbx)$ and generate a series of noisy samples $\bbx_t$ according to the distribution:
\begin{equation}
	\label{eq:kernel}
	p_t(\bbx_t | \bbx_0) = \gN(\bbx_t; \bbx_0, \sigma_t^2 \rmI),
\end{equation}
where $\sigma_t$ varies along time-step $t$. Accordingly, one can easily sample a noisy sample at any time step $t$ by $\bbx_t = \bbx_0 + \sigma_t \boldsymbol{\epsilon}_t$, where $\boldsymbol{\epsilon}_t \sim \gN(\mathbf{0}, \rmI)$.

The reverse process from a Gaussian noise $\bbx_T$  to a realistic sample $\bbx_0$  is also called trajectory sampling,  and can be formally formulated into a reverse SDE \cite{edm}:
\begin{equation}
	\label{eq:sample_SDE}
	\mathrm{d}\bbx = -\dot{\sigma}_t \sigma_t \nabla \log p_t(\bbx) \mathrm{d}t + \sqrt{\dot{\sigma}_t \sigma_t} \mathrm{d}\mathbf{w},
\end{equation}
where $\mathbf{w}$ is the standard Wiener process, $\dot{\sigma}_t$ is the time derivative of ${\sigma}_t$, and $\nabla \log p_t(\bbx)$ is the \textit{score function} which indicates the direction of the higher data density~\cite{SGM}. Meanwhile, there exists a corresponding reverse ordinary deterministic equation (ODE) which is defined as follows:
\begin{equation}
	\label{eq:sample_ODE}
	\mathrm{d}\mathbf{x} = -\dot{\sigma}_t \sigma_t \nabla \log p_t(\bbx) \mathrm{d}t.
\end{equation}
For this ODE, its trajectory shares the same marginal probability density as the SDE which ensures the same convergence point of ODE and SDE.  

Given any noise $\bbx_T \sim \mathcal{N}(\mathbf{0}, \sigma_T^2 \mathbf{I})$, one can either solve the reverse SDE in Eq.~\eqref{eq:sample_SDE} or the reverse ODE in Eq.~\eqref{eq:sample_ODE} via any numerical solver~\cite{atkinson1991introduction,lu2022dpm,zhang2023improved} to generate a real sample $\hat\bbx_0 \sim p_{\text{data}}(\mathbf{x})$. In practice, the pre-trained diffusion models~\cite{edm,rombach2022high} are used to estimate the score function, thereby guiding the sampling process.

\noindent\textbf{Text-to-3D Generation via Score Distillation Sampling (SDS).}
Given a camera pose $\pi$, SDS distills 2D priors of a pre-trained diffusion model $D_\phi(\cdot)$ into a 3D model (\eg, NeRF, Mesh) parameterized by $\bbt$. Formally, SDS applies a denoising training objective to the rendered image $\bbx_{\pi} = g(\bbt, \pi)$ where $g(\cdot)$ is a differentiable renderer and $\pi$ is a camera pose, and computes the gradient as
\begin{equation}
	\label{eq:sds}
	\nabla_{\bbt} \mathcal{L}_{\text{SDS}}(\bbt) = 
	\mathbb{E}_{t, \boldsymbol{\epsilon}}\left[\lambda(t)\left(\bbx_{\pi} - D_{\phi}(\bbx_t, t, y) \right) \frac{\partial \bbx_{\pi}}{\partial \bbt}\right],
\end{equation}
where $\lambda(t)$ denotes the loss weight, $\bbx_t = \bbx_{\pi} + \sigma_t \boldsymbol{\epsilon}_t$ denotes the noisy sample and $y$ is the text condition. 
\section{Method}
Here we elaborate on our proposed Consistent3D for effective text-to-3D generation. In Sec.~\ref{subsec:SDS_SDE}, we reveal the underlying mechanism of Score Distillation Sampling~(SDS) which aims to approximate the SDE sampling process and motivates our proposed methods. Then in Sec.~\ref{subsec:CDS}, we introduce Consistency Distillation Sampling~(CDS), a loss designed to efficiently distill deterministic sampling priors for text-to-3D generation, accompanied with a theoretical justification regarding the error bound. Finally, we present how to use our CDS to build our text-to-3D generation framework, dubbed ``Consistent3D", in Sec.~\ref{subsec:application}.

\begin{algorithm}[t!]
\SetKwInOut{Input}{Input}\SetKwInOut{Output}{Output}
\SetKwComment{Comment}{// }{}
\SetKw{Initialize}{\textbf{Initialize}}
\SetKw{Sample}{\textbf{Sample}}
\LinesNotNumbered 
\SetSideCommentRight
\Input{initial 3D model parameter $\bbt$, pre-trained diffusion model $D_{\phi}$, text prompt $y$, training iteration $N$, time-step range $[t_{\min}, t_{\max}]$, learning rate $\eta$}
\Output{$\bbt$}

\Sample $\boldsymbol{\epsilon}^* \sim \gN(\mathbf{0}, \rmI)$ \tcp{Fixed noise}

\ForEach{$i \in \{0, \dots, N\}$}{
    $t_2 \gets t_{\max} - (t_{\max} - t_{\min}) \sqrt{{i}/{N}}$

    {
        \Sample camera pose $\pi$
        
        \Sample $t_1 \in \gU[t_2 + \delta, t_2 + \Delta]$
    
        $\bbx_{\pi} \gets g(\bbt, \pi)$
        
        $\bbx_{t_1} \gets \bbx_{\pi} + \sigma_{t_1} \boldsymbol{\epsilon}^*$
        
        $\mathbf{d}_i \gets \left(\bbx_{t_1} - D_{\phi}(\bbx_{t_1}, t_1, y)\right)/\sigma_{t_1}$
        
        $\hat{\bbx}_{t_2} \gets \bbx_{t_1} + (\sigma_{t_2} - \sigma_{t_1})\mathbf{d}_i$
    
        $\hat{\bbx}_0 \gets \bbx_{\pi} + \operatorname{sg}\left[\sigma_{t_1} (\boldsymbol{\epsilon}^* - \mathbf{d}_i)\right]$
        
        $\gL_{\text{CDS}}(\bbt; \pi) \gets \lambda(t_2) \lVert \hat{\bbx}_0 - \operatorname{sg}\left[D_{\phi}(\hat{\bbx}_{t_2}, t_2, y)\right] \rVert_2^2$
        
        $\bbt \gets \bbt - \eta \nabla_{\bbt} \gL_{\text{CDS}}(\bbt; \pi)$
    }
}

\caption{Text-to-3D Generation with CDS}
\label{alg:CDS}
\end{algorithm}

\subsection{Revisit Score Distillation Sampling}
\label{subsec:SDS_SDE}
Before introducing our proposed method, we first connect SDE in Eq.~\eqref{eq:sample_SDE} with the leading text-to-3D generation approach SDS in Eq.~\eqref{eq:sds}, since this connection directly motivates us to use ODE in Eq.~\eqref{eq:sample_ODE} for text-to-3D generation.   

First, we discretize the reverse SDE in Eq.~\eqref{eq:sample_SDE} and perform the stochastic sampling process following \citet{DDPM}, which results in the SDE solution trajectory defined as
\begin{equation}
\label{eq:SDE_iter}
\begin{aligned}
    \bbx_{t_i} &= \hat{\bbx}^{i-1} + \sigma_{t_{i}} \bbe_{t_{i}} \quad \text{with} \quad \bbe_{t_{i}} \sim \gN(\mathbf{0}, \rmI), \\
    \hat{\bbx}^{i} &= D_{\phi}(\bbx_{t_i}, t_i),
\end{aligned}
\end{equation}
where $\hat{\bbx}^{0} = \mathbf{0}$ is to ensure $\hat{\bbx}_{T} = \hat{\bbx}^{0} + \sigma_{T} \bbe_{T} \sim \gN(\mathbf{0}, \sigma_T^2 \rmI)$, and time step schedule $\{t_i\}$ satisfies $T = t_1 > t_2 > \dots > t_N = 0$. To build a connection between SDE and SDS, in Eq.~\eqref{eq:SDE_iter}, we follow SDS to approximate the score function $\nabla \log p_t(\bbx)$ in vanilla SDE with a score network $D_{\phi}(\bbx_t, t)$. Then by iteratively running Eq.~\eqref{eq:SDE_iter} from $t_1$ to $t_N$, one can eventually compute the desired SDE solution in expectation, \eg, a real sample $\hat{\bbx}_N \sim p_{\text{data}}(\bbx)$ if the network $D_{\phi}(\cdot)$ is a well-trained diffusion model like Stable Diffusion~\cite{rombach2022high}.  

On the other hand, by fixing the camera pose $\pi$, by fixing the camera pose $\pi$, for a rendered image $\bbx_\pi$ by a learnable 3D model $\bbt$, the optimization process of SDS introduced in Sec.~\ref{sec:pre} can be formulated as: 
\begin{equation}
\label{eq:SDS_iter}
\small
\begin{aligned}
    \bbx_{t_{i}} &\!=\! \bbx_{\pi}^{i-1} \!+\! \sigma_{t_{i}} \bbe_{t_{i}}  \ \text{with} \  \bbe_{t_{i}} \sim \gN(\mathbf{0}, \rmI), \\
    \bbx_{\pi}^{i} &\!=\! g(\bbt^{i}, \pi)  \ \text{with}  \   \bbt^{i} \!=\! \argmin_{\bbt} \| g(\bbt, \pi) \! -\! D_{\phi}(\bbx_{t_i}, t_i)\|, \\
\end{aligned}
\end{equation}
where $\bbx_{\pi}^0 = g(\bbt^0, \pi)$ in which $\bbt^0$ denotes the randomly initialized 3D model~\cite{poole2022dreamfusion,lin2023magic3d}, and $g(\cdot)$ is a differentiable renderer~\cite{mildenhall2021nerf,garbin2021fastnerf}. Compared the stochastic sampling process in Eq.~\eqref{eq:SDE_iter} with SDS process in Eq.~\eqref{eq:SDS_iter}, one can observe that if for each iteration $i$, one can ideally optimize 3D model $\bbt^i$ so that $\| g(\bbt^{i}, \pi) - D_{\phi}(\bbx_{t_i}, t_i) \| = 0$, then one can have 
\begin{equation}
	\label{SDS_itedaasr}
	\begin{aligned}
	\bbx_{\pi}^{i} = g(\bbt^{i}, \pi)  = D_{\phi}(\bbx_{t_i}, t_i).
	\end{aligned}
\end{equation}
In this case, the SDS optimization process becomes exactly the same as the stochastic sampling process with $ \bbx_{\pi}^{i}$ replaced by $\hat\bbx^{i}$.

However, as illustrated in Fig.~\ref{fig:flow}, sampling along the SDE solution trajectory according to Eq.~\eqref{eq:SDE_iter} results in an unpredictable and highly variable next point $\hat{\bbx}^{i}$, which does not guarantee the correct direction. This issue also extends to the SDS optimization process, which is equivalent to the SDE trajectory in Eq.~\eqref{eq:SDS_iter} when the 3D model is ideally trained in each iteration (\ie, Eq.~\eqref{SDS_itedaasr} holds). 
Consequently, such inherent randomness in SDS leads to less accurate and reliable guidance throughout all training iterations. This could also help explain why SDS is so vulnerable and often suffers from geometry collapse and poor fine-grained texture as observed in many works \cite{tang2023dreamgaussian,wang2023prolificdreamer,zhu2023hifa}. 

\subsection{Consistency Distillation Sampling}
\label{subsec:CDS}
\noindent\textbf{3D Deterministic Sampling.}
Given the stochastic and unpredictable nature of SDS, we are motivated to explore the potential of the ODE deterministic process which can provide consistent and more accurate guidance than SDE for 3D generation as shown by Fig.~\ref{fig:flow}.
We start by focusing on the ODE sampling process for a 3D model $\bbt$:
\begin{equation}
\label{eq:3D_ODE}
	\mathrm{d}\bbt = -\dot{\sigma}_t \sigma_t \nabla \log p_t(\bbt) \mathrm{d}t,
\end{equation}
where $\bbt$ is randomly initialized according to a certain distribution. Following \citet{poole2022dreamfusion} and \citet{wang2023score}, one can derive the 3D score function $\nabla_{\bbt} \log p_t(\bbt) $ from the 2D score function using the chain rule:
\begin{equation}
\label{eq:chain}
    \nabla_{\bbt} \log p_t(\bbt) = \mathbb{E}_{\pi} \left[\nabla_{\bbx_{\pi}} \log p_t(\bbx_{\pi}) \frac{\partial \bbx_{\pi}}{\partial \bbt} \right],
\end{equation}
where the 2D score function  $\nabla_{\bbx} \log p_t(\bbx)$ can be estimated as $\nabla_{\bbx} \log p_t(\bbx) = (D_{\phi}(\bbx, t) - \bbx) / \sigma_t^2$ by a pre-trained diffusion model $D_{\phi}(\bbx, t)$. 
Therefore, the key to generating a satisfactory 3D model is to accurately perform the 3D ODE sampling in Eq.~\eqref{eq:3D_ODE} using the pre-trained diffusion model. 

Unfortunately, unlike the forward SDE process in which a noisy sample can be easily sampled from the perturbation kernel by $\bbx_t \sim p_t(\bbx_t | \bbx_{\pi})$ in Eq.~\eqref{eq:kernel}, the forward ODE requires iterative simulation of the ODE flow, such as DDIM inversion~\cite{mokady2023null} which is complex and time-consuming. This makes the approximation of the ODE flow with conventional SDS less efficient and is often impractical. Thus, directly applying SDS loss to the ODE flow is practically prohibited.  

Inspired by recent advances in diffusion model distillation techniques that facilitate approximation of this deterministic flow without extensive simulation~\cite{CD}, we develop a simple yet effective Consistency Distillation Sampling loss (CDS) tailored for general text-to-3D generation tasks. Further detailed discussions can be found in \cref{appsec:connection}.

\begin{figure}[!t]
\centering
\includegraphics[width=\linewidth]{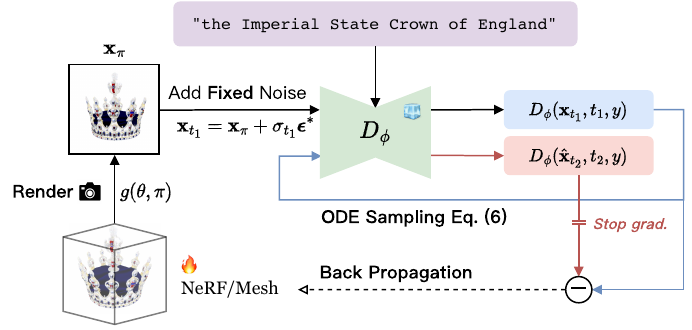}
\vspace{-15pt}
\caption{Overview of CDS. In each training iteration, the rendered image is perturbed by a fixed noise and then served as a start point of the deterministic flow for computing the CDS loss.}
\label{fig:framework}
\vspace{-15pt}
\end{figure}

\noindent\textbf{Optimization objective.}
We aim to enforce the optimization process of the 3D model $\bbt$ to match the deterministic flow between two \textit{adjacent} ODE sampling steps. Specifically, we always use a \textit{fixed} Gaussian noise $\bbe^*$ to perturb the sample, analogous to setting a fixed starting point at the final diffusion time step. This approach ensures a consistent perturbation in all iterations, similar to the technique used in Consistency Training~\cite{CD}. Next, we optimize $\bbt$ by minimizing  the following Consistency Distillation Sampling~(CDS) loss:
\begin{equation}
\label{eq:cds}
\mathbb{E}_{\pi} \left[\lambda(t_2) \lVert D_{\phi}(\bbx_{t_1}, t_1, y) - \operatorname{sg}(D_{\phi}(\hat{\bbx}_{t_2}, t_2, y)) \rVert_2^2\right],
\end{equation}
where $\operatorname{sg}(\cdot)$ is a stop-gradient operator, $t_1 > t_2$ are two adjacent diffusion time steps, $\bbx_{t_1} = \bbx_{\pi} + \sigma_{t_1} \bbe^*$, and $\hat{\bbx}_{t_2}$ is a less noisy sample derived from deterministic sampling by running one discretization step of a numerical ODE solver from $\bbx_{t_1}$. Particularly, we adopt the Euler solver to compute $\hat{\bbx}_{t_2}$ by:
\begin{equation}
\label{eq:step}
    \hat{\bbx}_{t_2} = \bbx_{t_1} + \frac{\sigma_{t_2} - \sigma_{t_1}}{\sigma_{t_1}}(\bbx_{t_1} - D_{\phi}(\bbx_{t_1}, t_1, y)).
\end{equation}
In practice, we follow \citet{poole2022dreamfusion} and reparameterize the first component in Eq.~\eqref{eq:cds} to skip the CDS gradient directly to $\bbx_{\pi}$ and $\bbt$ without computing the U-Net Jacobian. 

\noindent\textbf{Time step schedule.} As our target is to match the probability flow ODE of the reverse sampling process, we follow the conventional DMs~\cite{edm,SGM} and set the time steps to decrease monotonically along with the training iteration of the 3D models. This approach redefines our 3D generation process more as a deterministic sampling rather than a mere training process as previous SDS-based approaches~\cite{poole2022dreamfusion,wang2023score}, thus allowing us to take full advantage of deterministic sampling prior. 

Specifically, we define the time step schedule~\cite{yi2024diffusion} of $t_2$ according to the current training iteration:
\begin{equation}
\label{eq:anneal}
    t_2 := t_{\max} - (t_{\max} - t_{\min}) \sqrt{{i}/{N}},
\end{equation}
where $i$ and $N$ denotes the current iteration and total iteration, respectively. For the initial time step $t_1$ which indicates the perturbation level to the rendered image, we empirically uniformly sample it within $[t_2 + \delta, t_2 + \Delta]$, which is different from the predetermined time step schedule in Consistency Distillation~\cite{CD}. This is because we empirically find that the random sampled time step $t_1$ within a \textit{small} interval collaborated with the deterministic anchor $t_2$ exhibits self-calibration behaviors, which can actively correct the cumulative error made in earlier steps and alleviate issues such as floaters and Janus faces~\cite{hong2023debiasing,shi2023mvdream}. We delve deeper into this phenomenon in Sec.~\ref{subsec:ablation}. For more clarity, we summarize our entire text-to-3D generation procedure with the proposed CDS in Algorithm~\ref{alg:CDS}.

\noindent\textbf{Justification.} In the following, we offer a theoretical justification to demonstrate that, upon achieving convergence, our Consistency Distillation Sampling is capable of generating a high-fidelity 3D model.

\begin{theorem}
\label{thm:1}
Assume that the diffusion model $D_{\phi}(\cdot)$ satisfies the Lipschitz condition. Define $\Delta := \sup |t_1 - t_2| $.
For any given camera pose $\pi$, if convergence is achieved according to Eq.~\eqref{eq:cds}, then there exists a corresponding real image $\bbx^* \sim p_{\text{data}}(\bbx)$ such that
\begin{equation}
\| \bbx_{\pi} - \bbx^* \|_2 = \mathcal{O}(\Delta),
\end{equation}
where $\bbx_{\pi} = g(\bbt, \pi)$ denotes the rendered image for pose $\pi$.
\end{theorem}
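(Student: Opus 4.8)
The plan is to exploit the telescoping structure of the consistency loss along the discretized ODE trajectory. At convergence of Eq.~\eqref{eq:cds}, the loss is zero for every sampled pair $(t_1,t_2)$, hence $D_{\phi}(\bbx_{t_1},t_1,y) = D_{\phi}(\hat{\bbx}_{t_2},t_2,y)$ for all such pairs (up to a quantity controlled by the residual, which we treat as exactly zero at convergence). Reading this as a self-consistency condition on the ``denoiser applied along the flow,'' it says that one Euler step of the probability-flow ODE from $(\bbx_{t_1},t_1)$ lands at a point whose denoiser prediction agrees with the one at $t_1$. First I would set up the discretized ODE solver $\Phi$ generated by the Euler step in Eq.~\eqref{eq:step}, and define, for the fixed noise $\bbe^*$, the chain of anchors $t_2^{(0)} = t_1 > t_2^{(1)} > \dots > t_2^{(K)} \approx t_{\min}$ obtained by iterating the scheme, with consecutive gaps bounded by $\Delta$. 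The key observation is that the consistency condition forces the solver output to be a ``consistency function'' $f$ in the sense of \citet{CD}: $f(\bbx_t,t)$ is invariant along the ODE trajectory, so its value equals the value at the terminal time, namely a clean image $\bbx^\star$.

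The core estimate is a standard local-truncation-error plus Grönwall argument. Each Euler step incurs a one-step error $\mathcal{O}(\Delta^2)$ (using that $\sigma_t$ is smooth and $D_{\phi}$ is Lipschitz, so the exact ODE solution operator and its Euler approximation differ by $\mathcal{O}(\Delta^2)$ per step), and the Lipschitz property of $D_{\phi}$ (hence of the ODE vector field $-\dot{\sigma}_t\sigma_t\nabla\log p_t = (D_\phi(\bbx,t)-\bbx)\dot\sigma_t/\sigma_t$, on the relevant compact region) makes the error accumulation stable: summing $K = \mathcal{O}(1/\Delta)$ local errors gives a global discretization error $\mathcal{O}(\Delta)$. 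So the discrete consistency-flow trajectory starting from $(\bbx_{t_1},t_1)$ ends within $\mathcal{O}(\Delta)$ of the true ODE solution started from the same point, and the latter — being a reverse probability-flow ODE solution driven by the pretrained score — terminates at some $\bbx^\star \sim p_{\text{data}}$. Then I would chain the convergence identities: $\bbx_\pi$ is, up to the fixed-noise reparameterization, the $t\to 0$ reading of $D_\phi(\bbx_{t_1},t_1,y)$ (this is exactly what the $\hat\bbx_0$ line in Algorithm~\ref{alg:CDS} encodes, and what Eq.~\eqref{eq:cds}'s convergence pins down), so $\|\bbx_\pi - \bbx^\star\|_2$ telescopes into a sum of $K$ consecutive-anchor discrepancies, each of size $\mathcal{O}(\Delta^2)$ by the local-truncation bound, totalling $\mathcal{O}(\Delta)$.

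Concretely the steps are: (i) rewrite the Euler update Eq.~\eqref{eq:step} as $\hat\bbx_{t_2} = \bbx_{t_1} + (\sigma_{t_2}-\sigma_{t_1})\,\vd$ with $\vd = (\bbx_{t_1}-D_\phi(\bbx_{t_1},t_1,y))/\sigma_{t_1}$ the estimated score direction, identifying it as one step of Eq.~\eqref{eq:sample_ODE}; (ii) invoke Lipschitzness of $D_\phi$ to bound $\|\Phi(\bbx,t;t') - \text{ODE}_{t\to t'}(\bbx)\| \le C|t-t'|^2$ for $|t-t'|\le\Delta$; (iii) use the convergence hypothesis to get $D_\phi(\bbx_{t_1},t_1,y) = D_\phi(\hat\bbx_{t_2},t_2,y)$, and iterate: apply the same identity at the pair $(t_2, t_2')$ along the induced chain, which requires noting that $\hat\bbx_{t_2}$ is itself a valid start point for the next step with the same fixed noise structure; (iv) telescope $\|\bbx_\pi - \bbx^\star\|$ over the chain of anchors down to $t_{\min}\approx 0$ and collect the $\mathcal{O}(\Delta)$ bound, absorbing the $t_{\min}$ boundary gap into the constant (or noting $t_{\min}$ can be taken $\mathcal{O}(\Delta)$).

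The main obstacle I anticipate is step (iii): the convergence hypothesis is stated only for the specific pairs $(t_1,t_2)$ drawn during training — $t_1 \in \mathcal{U}[t_2+\delta, t_2+\Delta]$ with $t_2$ on the schedule Eq.~\eqref{eq:anneal} — so one must argue that these pairs suffice to chain an entire trajectory from the top time down to $0$, and that the intermediate points $\hat\bbx_{t_2}$ actually lie in the support where the fixed-noise perturbation kernel and the denoiser's Lipschitz estimate are valid (i.e., that $\hat\bbx_{t_2}$ is genuinely ``on-distribution'' for time $t_2$). This is precisely the role of using a \emph{fixed} $\bbe^*$: it guarantees that every rendered-image perturbation lies on one coherent ODE trajectory rather than a scattered family, so the chain closes up. I would make this rigorous either by assuming (as the theorem's ``if convergence is achieved according to Eq.~\eqref{eq:cds}'' implicitly does) that the zero-loss condition holds uniformly over the reachable $(t_1,t_2)$ pairs, or by adding the mild regularity assumption that $D_\phi$ maps the perturbed-image manifold into a compact set on which it is Lipschitz — after which the discretization-error bookkeeping is routine.
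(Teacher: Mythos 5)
Your proposal follows essentially the same route as the paper's proof: both read the zero-loss condition as the identity $D_{\phi}(\bbx_{t_n},t_n,y)=D_{\phi}(\hat{\bbx}_{t_{n+1}},t_{n+1},y)$, chain it along a discretization of the time interval, bound each link by the Lipschitz constant of $D_{\phi}$ times the $\mathcal{O}((t_n-t_{n+1})^2)$ local truncation error of the Euler step, and telescope the $\mathcal{O}(1/\Delta)$ links to obtain the global $\mathcal{O}(\Delta)$ bound. The concern you raise in step (iii) about whether the convergence hypothesis covers all pairs needed to close the chain is a real looseness that the paper's proof shares (it simply asserts the identity for all $T\geq t_n>t_{n+1}\geq 0$), so your treatment is, if anything, slightly more careful on that point.
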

\begin{proof}
The proof is based on the truncation error of the Euler solver. We provide the full proof in \cref{appsec:thm}.
\end{proof}
For a 3D model optimized using the CDS, Theorem~\ref{thm:1} guarantees that images rendered from any viewpoint of this model are realistic and closely align with the corresponding real-world scenes.

\begin{figure*}[!t]
\centering
\includegraphics[width=\linewidth]{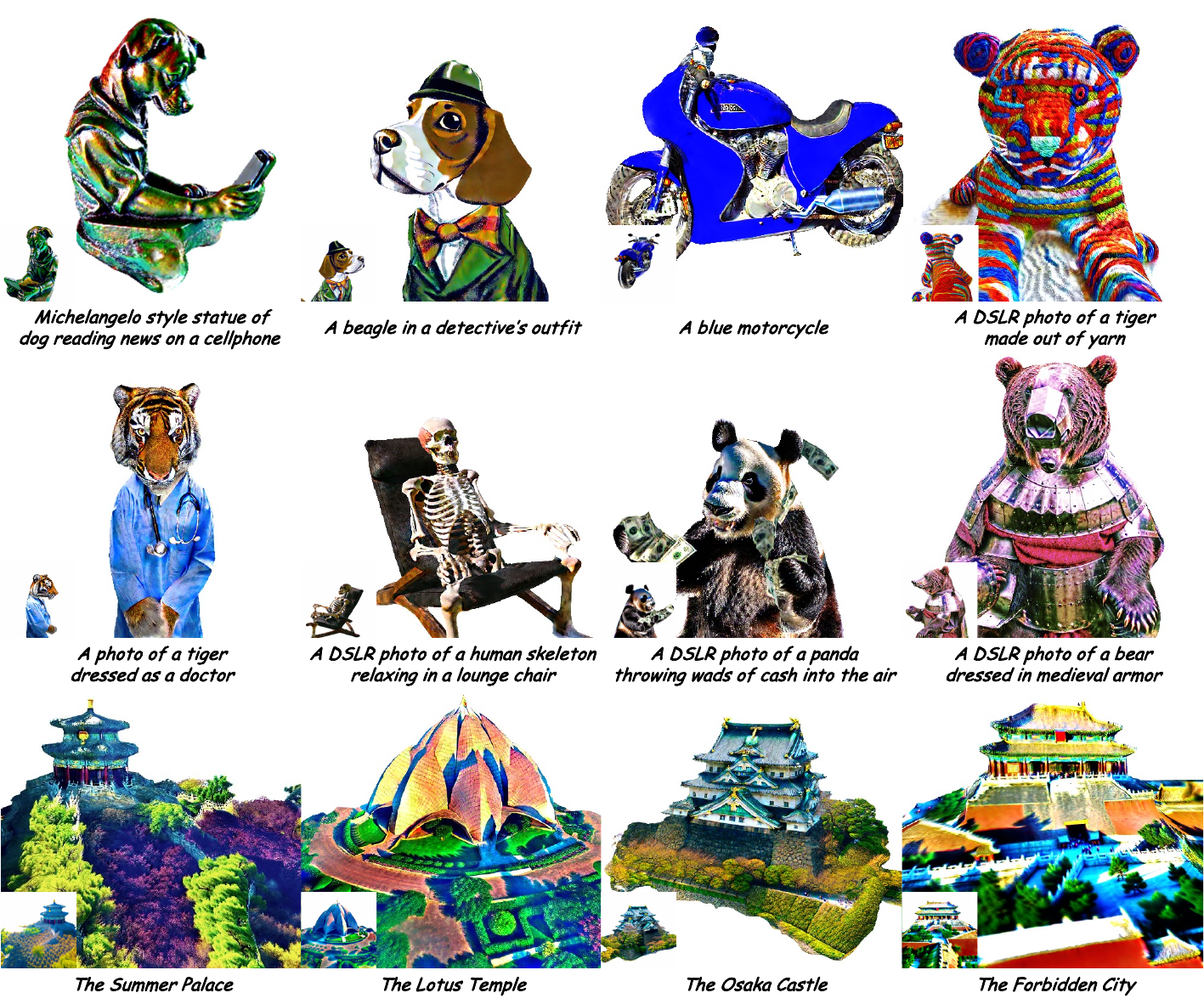}
\vspace{-18pt}
\caption{Consistent3D can generate diverse and high-fidelity objects or large-scale scenes highly correlated with the given text prompts.}
\label{fig:our_results}
\vspace{-12pt}
\end{figure*}

\begin{figure*}
\centering
\includegraphics[width=\linewidth]{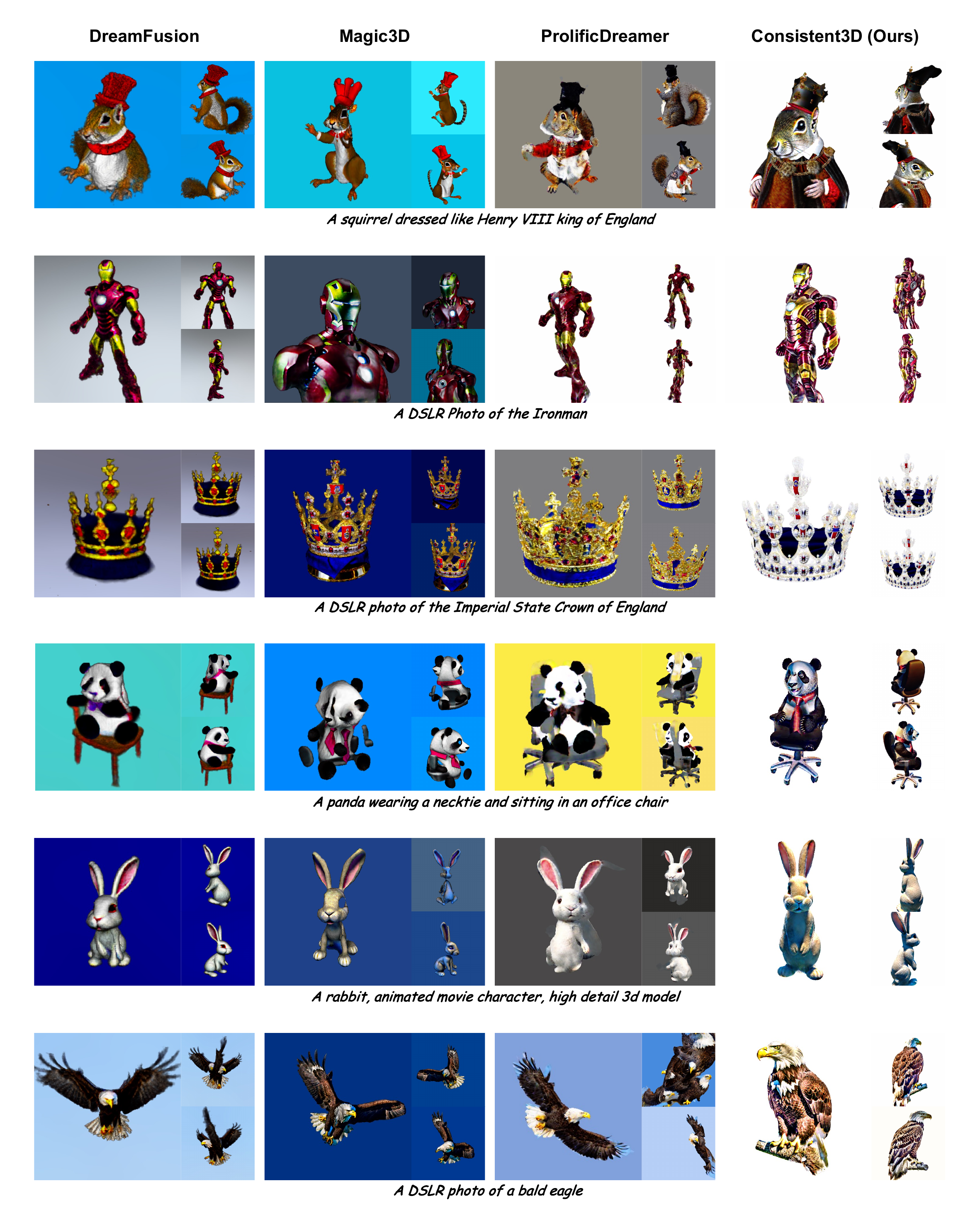}
\caption{\textbf{Qualitative Comparisons} of Text-to-3D Generation. Our approach yields results with enhanced fidelity and more robust geometry.}
\label{fig:comparison}
\end{figure*}

\subsection{Consistent3D}
\label{subsec:application}
Now we are ready to introduce our proposed Consistent3D. As illustrated in Fig.~\ref{fig:framework}, we present a clear design space for our Consistent3D generation framework using our proposed Consistency Distillation Sampling (CDS). 

Following previous work~\cite{lin2023magic3d,wang2023prolificdreamer}, Consistent3D is a coarse-to-fine approach consisting of two stages. Specifically, in the coarse stage, we optimize a low-resolution Neural Radiance Field (NeRF)~\cite{muller2022instant,barron2021mip}. For the refinement stage, we further optimize a high-resolution textured 3D mesh~\cite{shen2021deep} from the neural field initialization converting from the coarse stage. For these two stages, we always use our proposed CDS.

\noindent\textbf{NeRF Optimization Stage.} We adopt multi-resolution hash grids, Instant NGP~\cite{muller2022instant} to parameterize the scene by density and color with MLPs, which improves training and rendering efficiency. We follow Magic3D on density bias initialization, camera and light augmentation. In addition, we use orientation loss~\cite{poole2022dreamfusion} and 2D normal smooth loss~\cite{melas2023realfusion}. At this stage, we render $64 \times 64$ images and use our proposed CDS as guidance. We set $t_{\max} = 0.7$, $t_{\min} = 0.1$, $\delta = 0.1$, and $\Delta = 0.2$.

\noindent\textbf{Mesh Refinement Stage.} We convert the neural field into Signed Distance Field~(SDF) by subtracting it with a fixed threshold and then optimizing a high-resolution DMTet~\cite{shen2021deep}. We also initialize the volume texture field directly with the color field from the coarse stage. In addition, we use normal consistency loss and Laplacian smoothness loss. In the refinement stage, we render $512 \times 512$ images and set $t_{\max} = 0.5$, $t_{\min} = 0.02$, $\delta = 0.1$, and $\Delta = 0.1$.

\noindent\textbf{Fast Generation with 3D Gaussian Splatting.} Our Consistent3D with Consistency Distillation Sampling is a general text-to-3D generation framework that can be used to create a variety of 3D representations, including 3D Gaussian Splatting~\cite{kerbl20233d}. See results in \cref{appsubsec:gaussian}, our Consistent3D is capable of producing high-fidelity 3D models with intricate details in 15 minutes.

\section{Experiment}
\label{sec:exp}

\begin{figure*}[!t]
\centering
\includegraphics[width=\linewidth]{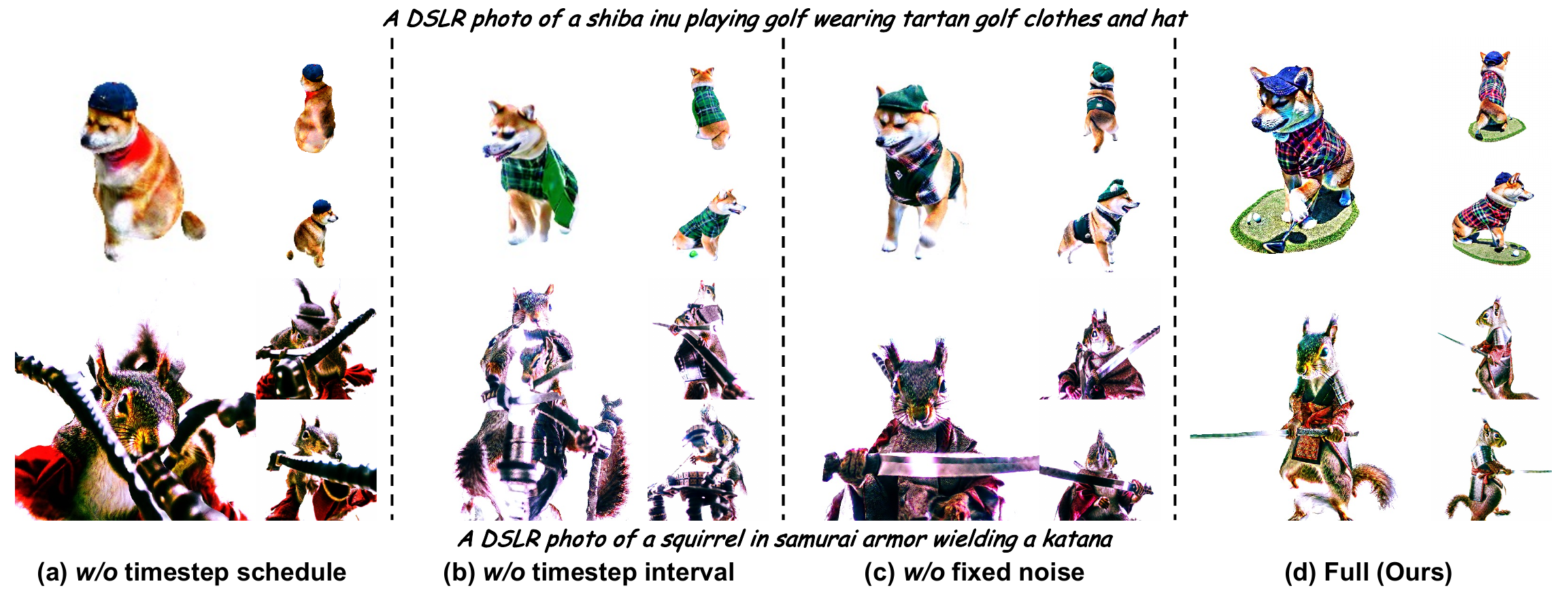}
\vspace{-25pt}
\caption{\textbf{Ablation study} of component-wise contribution of Consistent3D: (a) random time step schedule; (b) predetermined time step schedule; (c) random noise in each iteration; (d) our proposed configuration.}
\label{fig:ablation}
\vspace{-18pt}
\end{figure*}

\subsection{Implementation Details}
Consistent3D is implemented in PyTorch with a single NVIDIA A100 GPU based on \textit{threestudio}~\cite{threestudio2023} with Stable Diffusion v2.1~\cite{rombach2022high}. We use the Adan~\cite{xie2022adan} optimizer with a learning rate of $0.05$ for grid encoder and $0.005$ for other parameters, and a weight decay of $2\times10^{-8}$. Further implementation details are provided in \cref{appsubsec:implement}.

\subsection{Text-guided 3D Generation}
As illustrated in Fig.~\ref{fig:our_results}, our Consistent3D demonstrates versatility in generating high-fidelity 3D objects. Its generated images are not only realistic but also maintain consistency from various viewpoints. Furthermore, it is capable of generating large-scale scenes in $360^{\circ}$ with remarkable detail. See more qualitative results in \cref{appsubsec:qualitative}.

\subsection{Comparison with the State-of-The-Art}
\begin{table}[!t]
\centering
\begin{tabular}{ccc}
\toprule
Method & Loss & CLIP-R         \\
\midrule
DreamFusion~\cite{poole2022dreamfusion}         & SDS  & 0.310          \\
Magic3D~\cite{lin2023magic3d}             & SDS  & 0.311          \\
ProlificDreamer~\cite{wang2023prolificdreamer}     & VSD  & 0.336          \\
\hline
Consistent3D (Ours) & CDS  & \textbf{0.348} \\
\bottomrule
\end{tabular}
\vspace{-6pt}
\caption{\textbf{Quantitative Comparisons} of CLIP R-Precision. Scores were averaged from $40$ prompts in the DreamFusion gallery.}
\label{tab:CLIP}
\vspace{-15pt}
\end{table}
In this section, we present comprehensive qualitative and quantitative experiments to evaluate the efficacy of our Consistent3D framework in text-to-3D generation. We compare our generation performance with \textit{DreamFusion}~\cite{poole2022dreamfusion}, \textit{Magic3D}~\cite{lin2023magic3d}, and \textit{ProlificDreamer}~\cite{wang2023prolificdreamer}. For a fair comparison, we use the implementations of all the baseline methods from the open-source repository \textit{threestudio}~\cite{threestudio2023}.

\noindent \textbf{Qualitative Results.}
In Fig.~\ref{fig:comparison}, we provide qualitative comparisons with the baseline methods. Our approach exhibits more photorealistic details and geometry than both the SDS-based approaches like DreamFusion and Magic3D and the VSD-based approach ProlificDreamer. This improvement mainly stems from our Consistency Distillation Sampling~(CDS) which effectively leverages the full potential of large-scale diffusion models by accurately distilling deterministic sampling priors into the 3D model.

\noindent \textbf{Quantitative Results.}
In \cref{tab:CLIP}, we report the results of CLIP R-Precision~\cite{radford2021learning} for 3D objects generated using $40$ randomly selected text prompts from the DreamFusion gallery. See more details in \cref{appsubsec:implement}. Each 3D object is rendered from $120$ viewpoints with a uniform azimuth angle. The CLIP R-Precision score is computed by averaging the similarity scores between each rendered view and the corresponding text prompt~\cite{yi2023invariant}. Additionally, we also conduct a head-to-head user study in \cref{tab:study}. Our quantitative analysis shows the superior performance of our method. 

\subsection{Ablation Study}
\label{subsec:ablation}
We present an ablation study to evaluate the effects of various components in our approach in Fig.~\ref{fig:ablation} and \cref{tab:ab}. We conduct experiments with the following configurations: (a) a random time step schedule in DreamFusion~\cite{poole2022dreamfusion}; (b) a predetermined time step schedule from Consistency Distillation~\cite{CD}; (c) varied random noise in each iteration; and (d) our proposed method incorporating all components. The results  in  Fig.~\ref{fig:ablation}(a) reveal that a random time step schedule detrimentally affects both geometry and texture modeling, since it disrupts established rules of sampling process. Fig.~\ref{fig:ablation}(b) suggests that a predetermined time-step schedule is suboptimal for optimization-based methods, since gradient descent does not ensure monotonic optimization progress. This implies that minor randomness helps to accommodate these variations. Fig.~\ref{fig:ablation}(c) shows that fixed noise aids in better convergence by providing a consistent perturbation in each iteration.

\section{Conclusion}
In this work, we first connect Score Distillation Sampling (SDS), a leading text-to-3D generation approach, with the solution trajectory sampling of a stochastic differential equation (SDE). This connection helps us to understand the vulnerability in SDS, since the randomness in SDE sampling often provides a highly diverse sample, which is not always less noisy, and could guide the 3D model in the wrong direction. Then motivated by the fact that an ordinary differential equation (ODE) of an SDE can provide a deterministic and consistent sampling trajectory, we propose a novel and effective ``Consistent3D" by designing a consistency distillation sampling loss to distill the deterministic sampling prior into a 3D model for text-to-3D generation. Extensive experimental results show that our Consistent3D surpasses state-of-the-art methods in generating high-fidelity and diverse 3D objects and large-scale scenes.

\section*{Acknowledgement}
Pan Zhou was supported by the Singapore Ministry of Education (MOE) Academic Research Fund (AcRF) Tier 1 grant.

{
    \small
    \bibliographystyle{ieeenat_fullname}
    \bibliography{ref}
}

\clearpage
\maketitlesupplementary
\appendix


\section{Discussion}
\label{appsec:connection}
Diffusion models start by diffusing $p_{\text{data}}(\bbx)$ with a stochastic differential equation (SDE):
\begin{equation}
\mathrm{d} \mathbf{x} =\mu(t) \bbx \mathrm{d} t + \sigma(t) \mathrm{d} \mathbf{w},
\end{equation}
where $t \in [0, T]$, $\mu(\cdot)$ and $\sigma(\cdot)$ are the drift and diffusion coefficients respectively, and $\mathbf{w}$ denotes the standard Brownian motion. We denote the distribution of $\bbx_t$ by $p_t(\bbx)$.
A notable characteristic of this SDE is that there exists an Ordinary Differential Equation (ODE), named the Probability Flow (PF) ODE~\cite{SGM}, whose solution trajectories, when sampled at time $t$, adhere to the distribution $p_t(\bbx)$:
\begin{equation}
\mathrm{d} \mathbf{x} = \left[\mu(t)\bbx - \frac{1}{2} \sigma(t)^2 \nabla \log p_t(\mathbf{x})\right] \mathrm{d} t.
\end{equation}

Due to the above connection between the PF ODE and forward SDE, one can sample along the distribution of the ODE trajectories by first sampling $\bbx \sim p_{\text{data}}(\bbx)$, then adding Gaussian noise to $\bbx$. This implies that we can effectively sample two solutions on the PF ODE trajectory by first rendering an image $\bbx_{\pi}$, followed by perturbing it with Gaussian noise $\bbe^*$.
In this paper, we follow \citet{edm} and formulate the forward and reverse process as illustrated in \cref{sec:pre}, particularly $\mu(t) = 0$ and $\sigma(t) = \sqrt{2t}$. Thus, the perturbed sample is as follows:
\begin{equation}
    \bbx_{t_1} = \bbx_{\pi} + \sigma_{t_1} \bbe^*,
\end{equation}
and then computing $\hat{\bbx}_{t_2}$ using one discretization step of the numerical ODE solver by:
\begin{equation}
    \hat{\bbx}_{t_2} = \bbx_{t_1} + \frac{\sigma_{t_2} - \sigma_{t_1}}{\sigma_{t_1}}(\bbx_{t_1} - D_{\phi}(\bbx_{t_1}, t_1)).
\end{equation}
By optimizing the underlying 3D representation $\bbt$ to find an optimal $\bbx_{\pi}$, we can minimize the distance of the predicted sample from $D_{\phi}(\cdot)$ given $\bbx_{t_1}$ and $\hat{\bbx}_{t_2}$, 
\ie, minimizing the discrepancy of $D_{\phi}(\bbx_{t_1}, t_1)$ and $D_{\phi}(\hat{\bbx}_{t_2}, t_2)$ (\cref{eq:cds}). Therefore, we can eventually align $\bbx_{\pi}$, $\hat{\bbx}_{t_2}$ and $\bbx_{t_1}$ into the same deterministic flow, thus making $\bbx_{\pi}$ a realistic data point as it becomes a solution of the ODE sampling flow. Note that although this process introduces a truncation error from numerical ODE solvers, we will prove that our CDS achieves the same accuracy as multi-step sampling approaches in a single-step generative framework in \cref{appsec:thm}, and this error bound is almost optimal since one always needs to discretize the ODE flow to simulate it by diffusion models.

\section{Experiments}
\label{appsec:exp}

\begin{figure*}
\centering
\includegraphics[width=\textwidth]{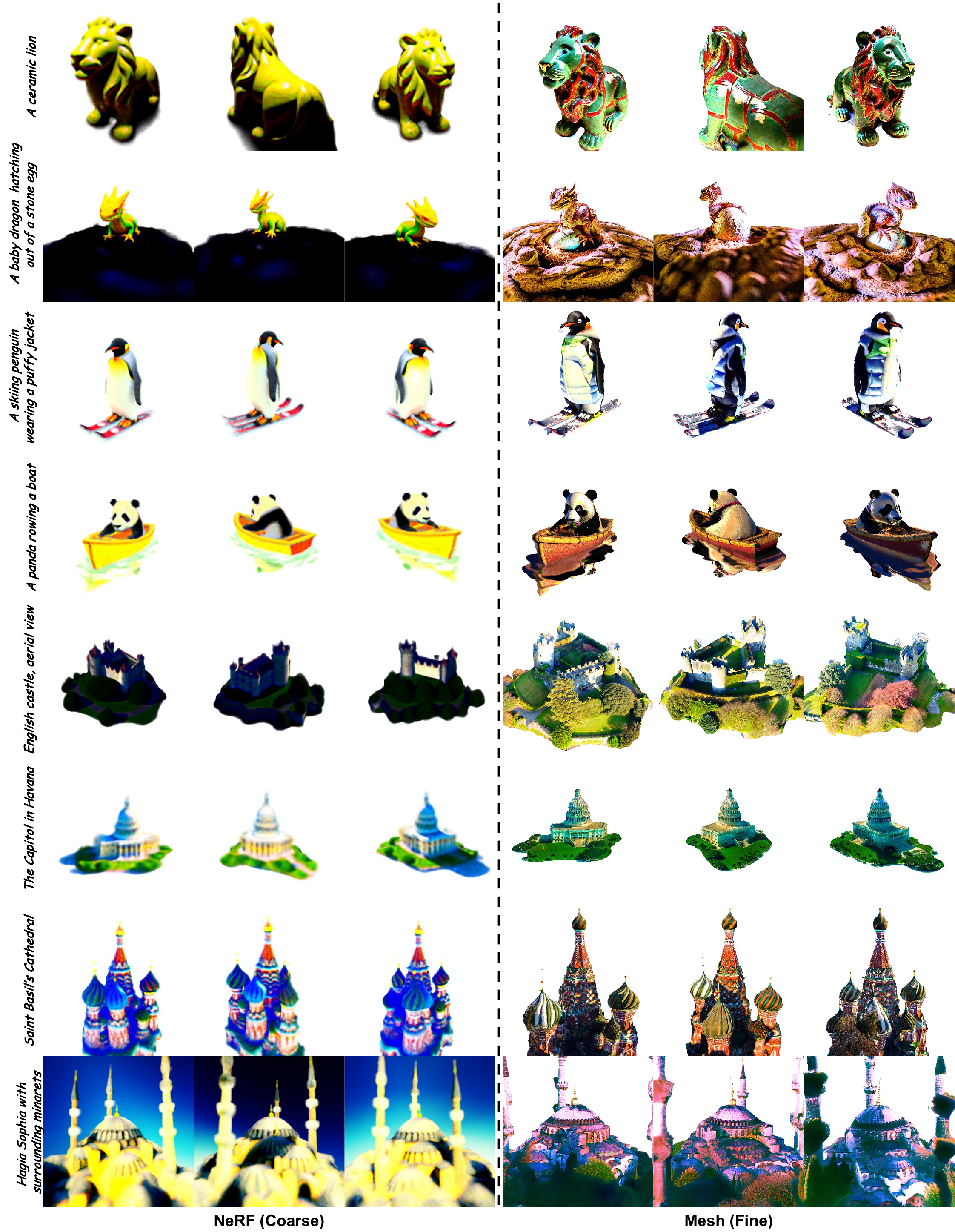}
\vspace{-18pt}
\caption{\textbf{Qualitative Results} of Two-Stage text-to-3D Generation: NeRF Optimization and Mesh Refinement.}
\label{fig:app-ours}
\end{figure*}

\begin{figure*}
\centering
\includegraphics[width=\textwidth]{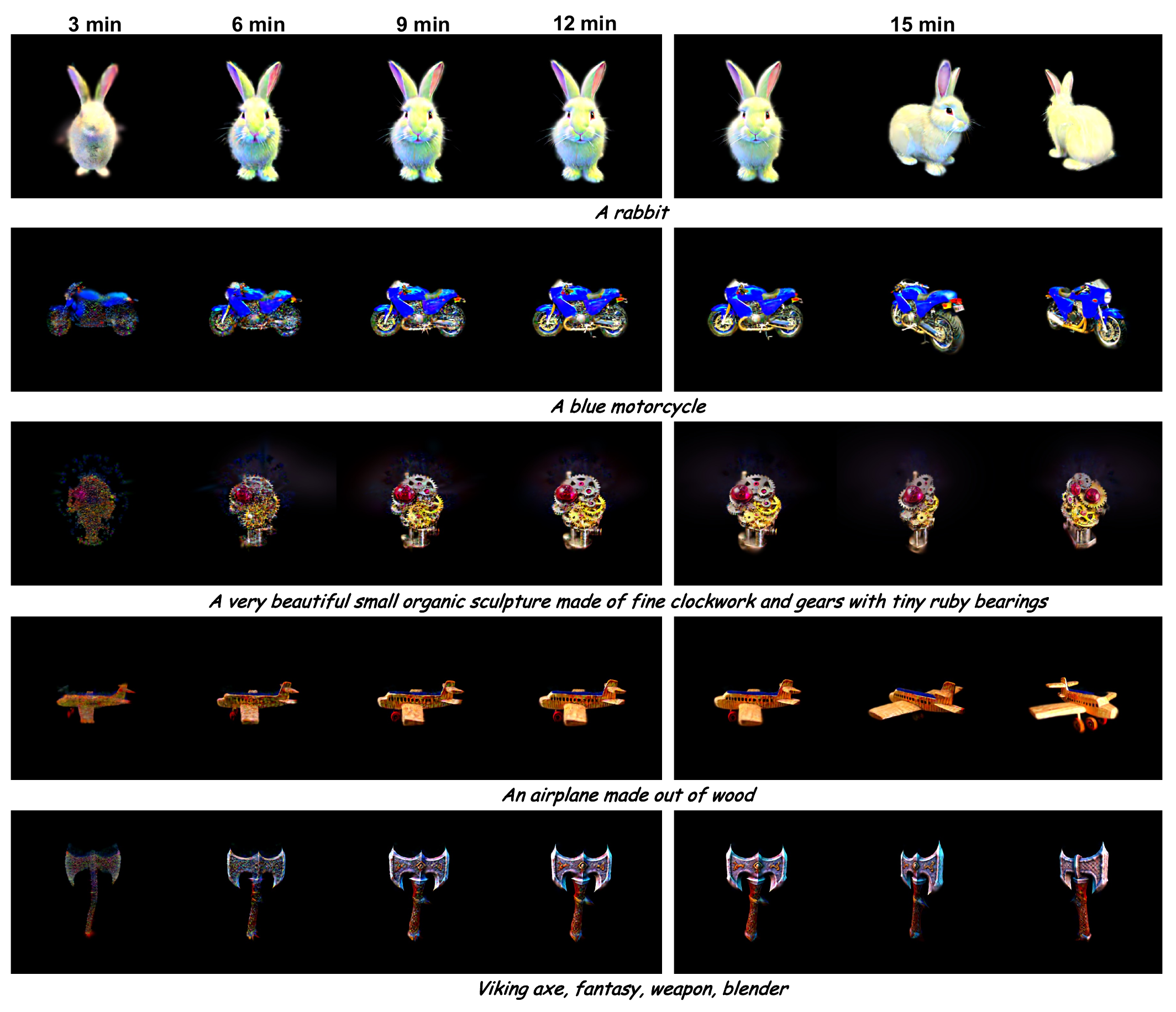}
\vspace{-6pt}
\caption{\textbf{Text-to-3D Generation using Consistent3D with 3D Gaussian Splatting.} Qualitative results demonstrate that our Consistent3D is capable of producing detailed, high-fidelity 3D models from text prompts in 15 minutes.}
\label{fig:app-gaussian}
\end{figure*}

\subsection{Additional Implementation Details}
\label{appsubsec:implement}
\noindent\textbf{Rendering settings.} For each iteration, we randomly select 12 camera poses with an 80\% probability of rendering normal maps and a 20\% probability of rendering colored images. The field of view (fovy) range is randomly sampled between 30 and 45 degrees, while the azimuth angles will be discussed in the following paragraph. The initial spherical radius is 2.0, and the camera distance is randomly sampled between 1.5 and 2.0. We did not use soft shading, as we found that it significantly slows down the training process. The final rendering resolutions are set to $64\times64$ and $512\times512$ for the coarse and fine stages, respectively.

\noindent\textbf{Modified batch uniform azimuth sampling.} We observe that vanilla batch uniform azimuth sampling is not suitable for view-dependent prompting selected within $\{$\texttt{front}, \texttt{side}, \texttt{back}$\}$ when the batch size is large (\eg, 12). This can lead to the Janus face issue, as the same guidance is shared between different azimuths. We empirically found that splitting the azimuth range into $4$ parts according to the prompt and uniform sampling of azimuths within each range can alleviate this problem.

\noindent\textbf{Annealed Classifier-free Guidance.}
We empirically find that reasonably large CFG weight leads to better details, which accords with the empirical results in DreamFusion~\cite{poole2022dreamfusion}. The optimization-based generation framework has a single-step sampling approach that is distinct from the multi-step sampling used in image generation. 
This single-step sampling framework requires a typically larger CFG to emphasize the details that appear in a single generation, whereas multi-step sampling is able to accumulate details many times over, thus allowing the use of smaller CFG. This framework also differs from other frameworks that require training of LoRA~\cite{wang2023prolificdreamer}, as LoRA plays a similar role to the high CFG values, \ie, providing better orientation in the single-step generation. However, oversaturation problems can occur in the generated images if the CFG values are too high for small time steps. Therefore, we suggest that the CFG value should also vary with the time step schedule. In other words, it should become progressively smaller. In practice, we linearly decrease the CFG value from $50$ to $20$ with increasing iteration.

\noindent\textbf{Evaluation Settings.} For quantitative evaluation, we measure the CLIP R-precision~\cite{radford2021learning} following the practice of DreamFusion~\cite{poole2022dreamfusion}. We compare with DreamFusion~\cite{poole2022dreamfusion}, Magic3D~\cite{lin2023magic3d} and ProlificDreamer~\cite{wang2023prolificdreamer} using 40 randomly selected text prompts from the DreamFusion gallery~\footnote{https://dreamfusion3d.github.io/gallery.html}. The prompts used in our experiments are listed in \cref{tab:zoo}. We then render $120$ views with uniformly sampled azimuth angles and calculate the CLIP R-precision based on each rendered image, and we average the different views for the final metric.

\subsection{Additional Results}
\label{appsubsec:qualitative}
\begin{table}[!t]
\centering
\begin{tabular}{c|c|c|c}
{(a)}   & {(b)}  & {(c)} & {(d) Ours} \\
\hline
0.319 & 0.325 & 0.340 & \textbf{0.348} \\
\end{tabular}
\vspace{-6pt}
\caption{CLIP R-precision of the ablation study. (a) random time step schedule; (b) predetermined time step
schedule; (c) random noise in each iteration; (d) our proposed configuration.}
\vspace{-9pt}
\label{tab:ab}
\end{table}
\begin{table}[!t]
\centering
\resizebox{1.0\linewidth}{!}{%
\begin{tabular}{c|c|c|c|c}
 & DF     & M3D    & PD     & Ours             \\
\hline
User Prefer. Rate ($\uparrow$) & 15.0\% & 5.4\% & 15.0\% & \textbf{64.6\%} \\
GPT4-V Avg. Rank ($\downarrow$) & 3.0 & 3.8 & 1.8 & \textbf{1.4} \\
\end{tabular}
}
\vspace{-6pt}
\caption{\textbf{User/AI study.} DF: DreamFusion; M3D: Magic3D; PD: ProlificDreamer.}
\vspace{-15pt}
\label{tab:study}
\end{table}

More qualitative results for both the coarse and fine stages are shown in Fig.~\ref{fig:app-ours}. We notice that our Consistent3D can generate robust geometry in the coarse stage and then seemly enhance high-frequency and sophisticated details in the fine stage. 
We conduct the ablation study quantitatively in \cref{tab:ab}, the results also support the  superiority of our Consistent3D over others. 

We also conduct a user study in Tab.~\ref{tab:study} where $50$ users select best from multiple choices on $30$ generated 3D assets. Moreover, GPT4-V is used to evaluate and rank the generated 3D assets from aesthetic appeal, multi-view consistency and alignment with text prompt. \cref{tab:study} demonstrates the superiority of the results generated by our Consistent3D from both human and large multi-modal AI perspectives.

\subsection{Fast Generation with 3D Gaussian Splatting}
\label{appsubsec:gaussian}
Our Consistent3D with Consistency Distillation Sampling is a general text-to-3D generation framework that can be used to create a variety of 3D representations, including 3D Gaussian Splatting~\cite{kerbl20233d}. As demonstrated in Fig.~\ref{fig:app-gaussian}, our Consistent3D is capable of producing high-fidelity 3D models with intricate details in 15 minutes, which vividly showcases the potential of CDS among different 3D representations (NeRF, Mesh, 3D Gaussian Splatting).

\section{Theoretical Proof}
\label{appsec:thm}
\setcounter{theorem}{0}
\begin{theorem}
\label{thm:proof}
Assume that the diffusion model $D_{\phi}(\cdot)$ satisfies the Lipschitz condition. Define $\Delta := \sup |t_1 - t_2| $.
For any given camera pose $\pi$, if convergence is achieved according to Eq.~\eqref{eq:cds}, then there exists a corresponding real image $\bbx^* \sim p_{\text{data}}(\bbx)$ such that
\begin{equation}
\| \bbx_{\pi} - \bbx^* \|_2 = \mathcal{O}(\Delta),
\end{equation}
where $\bbx_{\pi} = g(\bbt, \pi)$ denotes the rendered image for pose $\pi$.
\end{theorem}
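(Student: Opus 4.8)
The plan is to interpret the convergence condition of Eq.~\eqref{eq:cds} as an \emph{approximate self-consistency} of the denoiser along a single Euler step of the probability-flow ODE, and then to chain this local consistency down the time axis to relate $\bbx_\pi$ to the ODE's terminal point, which is a genuine data sample. First I would set up notation: write $f(\bbx,t) := D_\phi(\bbx,t,y)$, let $\Phi(\bbx_{t_1},t_1\!\to\! t_2)$ denote the exact ODE flow map from time $t_1$ to $t_2$ applied to $\bbx_{t_1}$, and let $\hat\bbx_{t_2}$ be its one-step Euler approximation as in Eq.~\eqref{eq:step}. Standard ODE truncation-error theory gives $\|\Phi(\bbx_{t_1},t_1\!\to\! t_2) - \hat\bbx_{t_2}\|_2 = \mathcal{O}(\Delta^2)$ for a single step, using the Lipschitz hypothesis on $f$ (hence on the ODE vector field $-\dot\sigma_t\sigma_t(\bbx - f)/\sigma_t^2$) to control the second derivative of the trajectory. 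Convergence of the CDS loss means $\|f(\bbx_{t_1},t_1) - f(\hat\bbx_{t_2},t_2)\|_2 = 0$ at the optimum (after the reparameterization, on the rendered image); combined with the Lipschitz bound $\|f(\hat\bbx_{t_2},t_2) - f(\Phi(\bbx_{t_1},t_1\!\to\! t_2),t_2)\|_2 = \mathcal{O}(\Delta^2)$, this says the denoiser prediction is invariant, up to $\mathcal{O}(\Delta^2)$, under moving $t_1$ to the adjacent anchor $t_2$ along the true ODE.

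Next I would argue that this near-invariance, propagated across the whole schedule from $t_{\max}$ down to $t_{\min}\approx 0$, forces $\bbx_\pi$ to lie $\mathcal{O}(\Delta)$-close to the ODE solution initialized from $\bbx_{t_1}=\bbx_\pi+\sigma_{t_1}\bbe^*$. The key observation is that the consistency function in Consistency Models — the map sending any point on an ODE trajectory to its endpoint at $t=0$ — is exactly what $f(\cdot,t)$ approximates under the CDS fixed point, since enforcing $f(\bbx_{t_1},t_1)=f(\hat\bbx_{t_2},t_2)$ along adjacent steps is the defining recursion of consistency distillation. Telescoping over the $\mathcal{O}(1/\Delta)$ steps of the schedule accumulates $\mathcal{O}(1/\Delta)\cdot\mathcal{O}(\Delta^2) = \mathcal{O}(\Delta)$ total error (the Lipschitz constant enters the Grönwall-type accumulation but stays bounded), so $f(\bbx_{t_1},t_1)$ equals the true ODE endpoint $\bbx^\star := \Phi(\bbx_{t_1},t_1\!\to\! 0)$ up to $\mathcal{O}(\Delta)$. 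Since the PF-ODE preserves the marginals (Sec.~\ref{sec:pre}, \cref{appsec:connection}) and $\bbx_{t_1}$ was obtained by perturbing $\bbx_\pi$ with Gaussian noise, $\bbx^\star\sim p_{\text{data}}(\bbx)$ is a bona fide real image.

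Finally I would close the loop between $f(\bbx_{t_1},t_1)$ and $\bbx_\pi$ itself. From the definitions, $\hat\bbx_0 = \bbx_\pi + \sigma_{t_1}(\bbe^* - \mathbf{d}_i)$ with $\mathbf{d}_i = (\bbx_{t_1}-f(\bbx_{t_1},t_1))/\sigma_{t_1}$, so $\hat\bbx_0 = \bbx_\pi + \sigma_{t_1}\bbe^* - (\bbx_{t_1}-f(\bbx_{t_1},t_1)) = f(\bbx_{t_1},t_1)$ exactly — the single-step denoised estimate from $t_1$ \emph{is} the rendered image's target. The CDS loss at convergence drives $\hat\bbx_0 = f(\hat\bbx_{t_2},t_2)$, and by the chain above $f(\hat\bbx_{t_2},t_2)$ is $\mathcal{O}(\Delta)$-close to $\bbx^\star$; one more application of the truncation/Lipschitz estimate relates the $t_1$-denoised image to $\bbx_\pi$ up to $\mathcal{O}(\sigma_{t_1})=\mathcal{O}(\Delta)$ (since $t_1-t_2\le\Delta$ and the relevant noise level near the end of the schedule is itself $\mathcal{O}(\Delta)$), yielding $\|\bbx_\pi - \bbx^\star\|_2 = \mathcal{O}(\Delta)$. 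The main obstacle I anticipate is making the telescoping rigorous: one must verify that the CDS fixed point is attained \emph{simultaneously} across the annealed schedule (not just at one $t_2$), so that the adjacent-step consistency relations can actually be composed, and one must track that the accumulated Lipschitz factors do not blow up the $\mathcal{O}(\Delta)$ bound — this is precisely the Grönwall/discrete-Lipschitz bookkeeping underlying the consistency-distillation error analysis of \cite{CD}, which is why the proof is deferred to \cref{appsec:thm}.
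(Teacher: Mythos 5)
Your proposal is correct and follows essentially the same route as the paper's proof in \cref{appsec:thm}: both exploit the adjacent-step consistency $D_\phi(\bbx_{t_n},t_n)=D_\phi(\hat\bbx_{t_{n+1}},t_{n+1})$ at the CDS optimum, bound each step by the Lipschitz constant times the local $\mathcal{O}((t_n-t_{n+1})^2)$ Euler truncation error, and telescope over the $\mathcal{O}(1/\Delta)$ steps of the schedule to accumulate a total error of $\mathcal{O}(\Delta)$ relative to the ODE endpoint $\bbx^*$. Your explicit observation that $\hat\bbx_0 = D_\phi(\bbx_{t_1},t_1)$, and your flagged concern about composing the consistency relations simultaneously across the annealed schedule, are points the paper's proof leaves implicit rather than departures from it.
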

\begin{proof}
From $\mathcal{L}_{\text{CDS}}(\bbt; \pi) = 0$, for any given $\pi$ and $T \geq t_n > t_{n+1} \geq 0$, it satisfied that
\begin{equation}
D_{\phi}(\bbx_{t_n}, t_n, y) \equiv D_{\phi}(\hat{\bbx}_{t_{n+1}}, t_{n+1}, y).
\end{equation}
Let $\boldsymbol{e}_n$ represent the error at $t_n$, which is defined as:
\begin{equation}
    \boldsymbol{e}_n := D_{\phi}(\bbx_{t_n}, t_n) - \bbx^*.
\end{equation}
We can derive the error at $t_{n+1}$ given the error at $t_n$:
\begin{align*}
    \boldsymbol{e}_{n} &= D_{\phi}(\bbx_{t_{n}}, t_{n}) - \bbx^* \\
    &= D_{\phi}(\hat{\bbx}_{t_{n+1}}, t_{n+1}) - \bbx^* \\
    &= D_{\phi}(\hat{\bbx}_{t_{n+1}}, t_{n+1}) - D_{\phi}(\bbx_{t_{n+1}}, t_{n+1}) \\
    & \quad + D_{\phi}(\bbx_{t_{n+1}}, t_{n+1}) - \bbx^* \\
    &= D_{\phi}(\hat{\bbx}_{t_{n+1}}, t_{n+1}) - D_{\phi}(\bbx_{t_{n+1}}, t_{n+1}) + \boldsymbol{e}_{n+1}.
\end{align*}
According to the Lipschitz condition, we can further derive
\begin{align*}
    \|\boldsymbol{e}_{n}\| &\leq \| D_{\phi}(\hat{\bbx}_{t_{n+1}}, t_{n+1}) - D_{\phi}(\bbx_{t_{n+1}}, t_{n+1}) \| + \| \boldsymbol{e}_{n+1} \| \\
    &\leq L \|\hat{\bbx}_{t_{n+1}} - \bbx_{t_{n+1}}\| + \| \boldsymbol{e}_{n+1} \| \\ 
    &\overset{(i)}{=} \| \boldsymbol{e}_{n+1} \| + \mathcal{O}((t_n - t_{n+1})^2),
\end{align*}
where $(i)$ hold according to the local error of Euler solver. 
Therefore, we can derive the error recursively:
\begin{align*}
    \| \boldsymbol{e}_0 \| &\leq \sum_{k=0}^{N-1} \mathcal{O}((t_k - t_{k+1})^2) \\
    &\leq \sum_{k=0}^{N-1} (t_k - t_{k+1}) \mathcal{O}(\Delta) \\
    &= \mathcal{O}(\Delta).
\end{align*}
\end{proof}

\begin{remark}
\cref{thm:proof} not only affirms the realistic rendering capabilities of CDS for any camera pose $\pi$, but also highlights that CDS achieves an error bound of $\mathcal{O}(\Delta)$, which is the same as multi-step sampling approaches~\cite{DDIM, tang2023dreamgaussian}. This shows that CDS is capable of achieving the same accuracy as multi-step approaches in a single-step generative framework, thus demonstrating its efficiency and broad applicability for optimization-based generation.
\end{remark}
\section{Limitations}
\label{appsec:lim}

Our approach relies on pre-trained diffusion models without 3D priors, and it may sometimes produce less than satisfactory results, especially in complex 3D modeling scenarios. 
Additionally, pre-trained models may unintentionally transfer unwanted bias from their original training data and parameters into the generated 3D models.

These challenges highlight two critical areas for future research and development in 3D generation. First, there is a pressing need to develop generative models that incorporate robust 3D-centric training. Such models would be better equipped to handle the complexities and nuances inherent in 3D structures. Second, it is essential to devise strategies that effectively identify and neutralize biases transferred from pre-trained models. Addressing these issues will not only improve the accuracy and reliability of 3D generation, but will also ensure the ethical integrity and fairness of the generative process.

\begin{table*}[!t]
\centering
\begin{tabular}{|c|l|}
\hline
\textbf{ID} & \textbf{Prompt} \\
\hline
1 & A DSLR photo of a squirrel dressed like a clown \\
2 & A DSLR photo of a tiger made out of yarn \\
3 & A DSLR photo of the Imperial State Crown of England \\
4 & A beagle in a detective's outfit \\
5 & A blue motorcycle \\
6 & A blue poison-dart frog sitting on a water lily \\
7 & A cat with a mullet \\
8 & A ceramic lion \\
9 & A highly detailed sand castle \\
10 & A lemur taking notes in a journal \\
11 & A panda rowing a boat \\
12 & A panda wearing a necktie and sitting in an office chair \\
13 & A photo of a skiing penguin wearing a puffy jacket, highly realistic DSLR photo \\
14 & A photo of a tiger dressed as a doctor, highly realistic DSLR photo \\
15 & A photo of the Ironman, highly realistic DSLR photo \\
16 & A rabbit, animated movie character, high detail 3D model \\
17 & A silver platter piled high with fruits \\
18 & A squirrel dressed like Henry VIII king of England \\
19 & A tarantula, highly detailed \\
20 & A tiger wearing a tuxedo \\
21 & A wide-angle DSLR photo of a squirrel in samurai armor wielding a katana \\
22 & A zoomed-out DSLR photo of a 3D model of an adorable cottage with a thatched roof \\
23 & A zoomed-out DSLR photo of a human skeleton relaxing in a lounge chair \\
24 & A zoomed-out DSLR photo of a model of a house in Tudor style \\
25 & A zoomed-out DSLR photo of a panda throwing wads of cash into the air \\
26 & An astronaut is riding a horse \\
27 & The Great Wall, aerial view \\
28 & Michelangelo style statue of dog reading news on a cellphone \\
29 & A DSLR photo of a baby dragon hatching out of a stone egg \\
30 & A DSLR photo of a bald eagle \\
31 & A DSLR photo of a bear dressed in medieval armor \\
32 & A DSLR photo of a humanoid robot holding a human brain \\
33 & A DSLR photo of a peacock on a surfboard \\
34 & A DSLR photo of a red pickup truck driving across a stream \\
35 & A DSLR photo of a Shiba Inu playing golf wearing tartan golf clothes and hat \\
36 & A 20-sided die made out of glass. \\
37 & A DSLR photo of Mount Fuji, aerial view. \\
38 & An old vintage car. \\
39 & A delicious hamburger. \\
40 & A cute steampunk elephant. \\
\hline
\end{tabular}
\caption{\textbf{Prompt library} for quantitative results.}
\label{tab:zoo}
\end{table*}

\end{document}